\documentclass[sigconf]{acmart}

\usepackage{amsmath}
\usepackage{booktabs}
\usepackage{graphicx}
\usepackage{mathtools}

\renewcommand\footnotetextcopyrightpermission[1]{}
\setcopyright{none}
\acmConference[CONSEQUENCES '26]{The 5th Workshop on Causality, Counterfactuals and Sequential Decision-Making for Recommender Systems}{October 2, 2026}{Minneapolis, MN, USA}
\acmBooktitle{The 5th Workshop on Causality, Counterfactuals and Sequential Decision-Making for Recommender Systems (CONSEQUENCES '26), October 2, 2026, Minneapolis, MN, USA}
\acmYear{2026}
\copyrightyear{2026}
\acmDOI{}
\acmISBN{}

\newtheorem{proposition}{Proposition}
\newtheorem{theorem}{Theorem}

\newcommand{\E}{\mathbb E}
\newcommand{\Var}{\mathbb V}

\title{Adaptive Doubly Robust Off-Policy Evaluation for Ranking Policies under Diverse User Behavior}
\hypersetup{pdfauthor={Kosuke Iguchi, Ren Kishimoto}}

\author{Kosuke Iguchi}
\affiliation{%
  \institution{Institute of Science Tokyo}
  \city{Tokyo}
  \country{Japan}
}
\email{iguchi.k.3444@m.isct.ac.jp}

\author{Ren Kishimoto}
\affiliation{%
  \institution{Institute of Science Tokyo}
  \city{Tokyo}
  \country{Japan}
}
\email{kishimoto.r.ab@m.titech.ac.jp}

\begin{document}

\begin{abstract}
Off-policy evaluation (OPE) of ranking policies is challenging because selecting and ordering multiple items from a candidate set makes the number of possible rankings grow combinatorially with the number of candidates and the ranking length.
Consequently, Inverse Propensity Scoring (IPS), whose importance weight is the full-ranking probability ratio under the evaluation and logging policies, can have excessive variance.
Independent IPS (IIPS) and Reward Interaction IPS (RIPS) reduce variance by imposing fixed assumptions on how users browse rankings, but may introduce bias when those assumptions mismatch actual behavior.
Adaptive Inverse Propensity Scoring (AIPS) addresses this trade-off by adaptively marginalizing importance weights over the actions that affect each position-wise reward.
It attains minimum variance within a class of unbiased IPS-based estimators when the true user behavior model is observed.
However, its estimation accuracy may still degrade for longer rankings, and AIPS does not use a reward model for residual correction.
We propose Adaptive Doubly Robust (ADR), which combines adaptive importance weighting with reward regression through a control-variate correction.
We establish its unbiasedness when the true user behavior model is observed and characterize a sufficient condition under which it reduces variance relative to AIPS.
Across synthetic experiments with 10,000 simulations per condition, ADR improves mean squared error over AIPS and conventional ranking OPE estimators across a range of logged-data sizes and ranking lengths.
\end{abstract}

\keywords{off-policy evaluation, ranking policies, reward regression, adaptive importance weighting}

\maketitle

\section{Introduction}

Ranking interfaces, which present users with an ordered list of multiple items,
are a common presentation format in recommendation and search systems
\cite{swaminathan2017slate,mcinerney2020rips,kiyohara2022cascade,kiyohara2023aips}.
The most direct way to evaluate a new ranking policy is to deploy it in an
online A/B test. However, online experiments can be costly and time-consuming, and
deploying a poor policy may damage user satisfaction
\cite{gilotte2018offlineab,gruson2019playlist,saito2021counterfactual}.
Off-policy evaluation (OPE) instead estimates the value of a new policy using
only logged data collected by a past policy
\cite{li2011news,dudik2011dr,saito2021counterfactual}.

OPE has been studied extensively in contextual bandits
\cite{dudik2011dr,li2011news,wang2017optimal}, but importance-weighting estimators can
suffer from high variance when the number of available actions is large
\cite{saito2022mips,saito2023offcem}.
The same challenge arises in combinatorial action spaces, where a policy
selects multiple actions simultaneously
\cite{swaminathan2017slate,kiyohara2024lips,shimizu2024opcb}.
For ranking policies in particular, the action space grows combinatorially with
the number of candidate items and the ranking length, causing the
full-ranking importance weights used by Inverse Propensity Scoring (IPS) to
have high variance
\cite{li2018clickmodels,mcinerney2020rips,kiyohara2023aips}.

To reduce this variance, estimators that exploit reward structure within a
ranking have been proposed, including Independent IPS (IIPS)
\cite{li2018clickmodels} under position-wise independence and Reward
Interaction IPS (RIPS) \cite{mcinerney2020rips} under cascade reward
interactions.
Studies of click modeling suggest that
examination and click behavior can vary across session contexts and users
\cite{chen2020contextaware,zhang2022personalized}.
Consequently, misspecifying the reward-interaction structure can introduce bias.
Adaptive Inverse Propensity Scoring (AIPS)~\cite{kiyohara2023aips},
which generalizes IPS, IIPS, and RIPS,
improves this bias--variance trade-off through adaptive
importance weighting based on context-dependent user behavior models.
When the user behavior model is observed, AIPS is unbiased under
context-dependent distributions of user behavior and achieves minimum variance
within the class of IPS-based unbiased estimators considered by Kiyohara et al.

Doubly Robust (DR) estimators use a reward model as a control
variate and apply importance weighting to the residual, which can improve the
bias--variance trade-off when the reward model is sufficiently informative
\cite{dudik2011dr,wang2017optimal,su2020shrinkage}.
For ranking policies, Cascade-DR combines this correction with a cascade
assumption shared across the population \cite{kiyohara2022cascade}.
AIPS adaptively marginalizes the importance weights but does not use a
reward model as a control variate.
We therefore propose \emph{Adaptive Doubly Robust (ADR)}, which combines the
adaptive importance weighting of AIPS with a reward-model residual correction.

Our contributions are threefold:
(1) we define the ADR estimator, combining the adaptive importance weights of AIPS with a reward-model residual correction;
(2) we establish unbiasedness when the true user behavior model is observed, characterize the variance difference relative to AIPS, and derive a sufficient condition for variance reduction; and
(3) we implement ADR by learning both models from logged data and show that it achieves lower mean squared error (MSE) than AIPS and existing estimators across a broad range of logged-data sizes and ranking lengths.

\vskip-\baselineskip
\section{Preliminaries}

This section introduces the ranking-policy OPE setting and the estimators
needed to define our approach.

\subsection{Off-Policy Evaluation of Ranking Policies}

Let $x\in\mathcal X$ be a context, $\mathcal A$ a finite action set,
$a=(a_1,\ldots,a_L)$ a ranking action of length $L$, and
$r=(r(1),\ldots,r(L))$ its position-wise reward vector.
Let $\pi_0(a\mid x)$ and $\pi_e(a\mid x)$ denote the logging and evaluation
policies.
Each logged observation $(x_i,a_i,r_i)$ is generated from the joint density
$p(x_i)\pi_0(a_i\mid x_i)p(r_i\mid x_i,a_i)$.
\mbox{For $D=\{(x_i,a_i,r_i)\}_{i=1}^n$,} the joint density is
\[
  p(D)
  =
  \prod_{i=1}^n
  p(x_i)\pi_0(a_i\mid x_i)p(r_i\mid x_i,a_i)
\]
Let
\[
  q_l(x,a)
  :=
  \mathbb E_{p(r(l)\mid x,a)}[r(l)]
\]
denote the expected reward at position $l$.
Our goal is to estimate the value of an evaluation policy $\pi_e$,
\begin{equation}
  V(\pi_e)
  =
  \sum_{l=1}^L\alpha_l
  \mathbb E_{p(x)\pi_e(a\mid x)}
  [q_l(x,a)]
  \label{eq:value}
\end{equation}
where $\alpha_l\geq0$ is a position weight. Setting
$\alpha_l=1/\log_2(l+1)$ makes the policy value identical to Discounted
Cumulative Gain (DCG) \cite{kiyohara2023aips}.
Let $\widehat V(\pi_e;D)$ denote an estimator of $V(\pi_e)$ constructed from
logged data $D$. Its accuracy is quantified by the mean squared error
\begin{align}
  \operatorname{MSE}(\widehat V)
  &:=
  \mathbb E_{p(D)}
  \left[
    \{V(\pi_e)-\widehat V(\pi_e;D)\}^2
  \right]\notag\\
  &=
  \operatorname{Bias}(\widehat V)^2
  +
  \operatorname{Var}(\widehat V)
  \label{eq:mse}
\end{align}
Here,
\begin{align*}
  \operatorname{Bias}(\widehat V)
  &:=
  \mathbb E_{p(D)}[\widehat V(\pi_e;D)]
  -
  V(\pi_e)\\
  \operatorname{Var}(\widehat V)
  &:=
  \mathbb E_{p(D)}
  \left[
    \left\{
      \widehat V(\pi_e;D)
      -
      \mathbb E_{p(D)}[\widehat V(\pi_e;D)]
    \right\}^2
  \right]
\end{align*}
This is the standard bias--variance decomposition used in OPE
\cite{dudik2011dr,kiyohara2023aips}.
We assume common support: for every ranking assigned positive probability by
the evaluation policy, the logging policy also assigns positive probability,
$\pi_e(a\mid x)>0\Rightarrow\pi_0(a\mid x)>0$.

A standard estimator in this setting is Inverse Propensity Scoring (IPS),
which uses the importance weight
\[
  w_{\mathrm{IPS}}(x,a)
  :=
  \frac{\pi_e(a\mid x)}{\pi_0(a\mid x)}
\]
This weight reweights rankings observed under the logging policy according to
their relative probability under the evaluation policy.
The IPS estimator is
\begin{equation}
  \widehat V_{\mathrm{IPS}}(\pi_e;D)
  =
  \frac1n\sum_{i=1}^n
  w_{\mathrm{IPS}}(x_i,a_i)
  \sum_{l=1}^L\alpha_l r_i(l)
\end{equation}
IPS imposes no particular user behavior model and is unbiased under common
support. However, full-ranking importance weights can
have extremely high variance because the action space grows combinatorially with
the ranking length \cite{mcinerney2020rips,kiyohara2023aips}.

A standard Doubly Robust (DR) estimator~\cite{dudik2011dr} uses a
position-wise reward model $\widehat q_l(x,a)$. Define its evaluation-policy
expectation as
\[
  \widehat q_l(x,\pi_e)
  :=
  \mathbb E_{\pi_e(a\mid x)}
  [\widehat q_l(x,a)]
\]
Then
\begin{align}
  \widehat V_{\mathrm{DR}}(\pi_e;D)
  &=
  \frac1n\sum_{i=1}^n\sum_{l=1}^L\alpha_l
  \left[
    \widehat q_l(x_i,\pi_e)
  \right.
  \nonumber\\
  &\hspace{9mm}\left.
    +w_{\mathrm{IPS}}(x_i,a_i)
    \{r_i(l)-\widehat q_l(x_i,a_i)\}
  \right]
\end{align}
The first term provides a reward-model baseline, while the second applies
importance weighting to its residual error; this construction can reduce
variance while preserving the unbiasedness of IPS.

In ranking recommendation, several estimators improve on IPS by imposing
assumptions on user behavior.
Independent IPS (IIPS) \cite{li2018clickmodels} assumes that the reward at
position $l$ depends only on $a_l$ and uses its marginal action-choice
probability. Reward Interaction IPS (RIPS) \cite{mcinerney2020rips} adopts the
cascade assumption and uses the marginal selection probability of the ordered
items at positions 1 through $l$, $a_{1:l}=(a_1,\ldots,a_l)$.
Their position-wise importance weights are
\[
  w_{l,\mathrm{IIPS}}(x,a_l)
  =
  \frac{\pi_e(a_l\mid x)}{\pi_0(a_l\mid x)}
  \qquad
  w_{l,\mathrm{RIPS}}(x,a_{1:l})
  =
  \frac{\pi_e(a_{1:l}\mid x)}{\pi_0(a_{1:l}\mid x)}
\]
Thus, they replace the full-ranking IPS weight with a marginal probability
ratio over the relevant actions specified by the corresponding assumption.
These structural assumptions can reduce variance when correctly specified, but
applying a single assumption to the entire population can introduce bias under
heterogeneous user behavior.

\subsection{Adaptive IPS}

Avoiding the bias caused by a single fixed user-behavior assumption requires
importance weights that adapt to context.
Adaptive IPS (AIPS)~\cite{kiyohara2023aips} addresses this need through
adaptive importance weighting based on a
context-aware user behavior model.
Specifically, when estimating the policy value at position $l$, AIPS uses the
relevant set of actions that affect the reward observed at that position to define
the importance weight.
Let $c\in\{0,1\}^{L\times L}$ be an action-reward interaction matrix, where
$c_{l,j}=1$ indicates that action $a_j$ affects reward $r(l)$, and define
\[
  \Phi_l(a,c):=\{a_j\in\mathcal A\mid c_{l,j}=1\}
\]
The user behavior model follows a context-dependent distribution
$c\sim p(c\mid x)$. Conditional on $c$, the expected reward satisfies
$q_l(x,a,c)=q_l(x,\Phi_l(a,c))
:=\mathbb E[r(l)\mid x,\Phi_l(a,c)]$.
The expected reward introduced in Section~2.1 is its marginal:
\[
  q_l(x,a)
  =
  \mathbb E_{p(c\mid x)}
  [q_l(x,a,c)]
\]
The independence, cascade, and no-assumption models correspond to using
$a_l$, $a_{1:l}$, and the full ranking $a$, respectively.

Following the theoretical setting of AIPS, we assume that $c_i$ is observed
for each logged sample. Thus, $c_i$ is treated as an additional observed
component of $D$ in the AIPS and ADR definitions below.
The adaptive importance weight and AIPS estimator are
\begin{align}
  w_l(x,a,c)
  &:=
  \frac{\pi_e(\Phi_l(a,c)\mid x)}
       {\pi_0(\Phi_l(a,c)\mid x)}
  \label{eq:aips-weight}\\
  \widehat V_{\mathrm{AIPS}}(\pi_e;D)
  &:=
  \frac1n\sum_{i=1}^n\sum_{l=1}^L
  \alpha_l w_l(x_i,a_i,c_i)r_i(l)
\end{align}
Here, $\pi(\Phi_l(a,c)\mid x)$ marginalizes the policy probability over rankings
that agree on the relevant actions.
When $c$ is observed, AIPS is unbiased under context-dependent distributions of
user behavior and achieves minimum variance within the class of IPS-based unbiased
estimators considered by Kiyohara et al.\ \cite{kiyohara2023aips}.
However, accurate OPE with AIPS can remain difficult when the number of unique
actions is large. Moreover, AIPS is an IPS-based estimator and does not use a
reward model as a control variate.

\section{Our Approach}

This section defines Adaptive Doubly Robust (ADR), which combines AIPS's
adaptive importance weighting with a reward-model residual correction, and
analyzes its unbiasedness and variance reduction.

\subsection{Estimator}

For each position $l$, let $\widehat q_l$ be a reward model evaluated at
$(x,\Phi_l(a,c))$. Define its evaluation-policy expectation as
\[
  \widehat q_l(x,c,\pi_e)
  :=
  \mathbb{E}_{\pi_e(a\mid x)}
  [\widehat q_l(x,\Phi_l(a,c))]
\]
The ADR estimator takes the form
\begin{align}
  \widehat V_{\mathrm{ADR}}(\pi_e;D)
  &=
  \frac{1}{n}\sum_{i=1}^{n}\sum_{l=1}^{L}\alpha_l
  \left[
    \widehat q_l(x_i,c_i,\pi_e)
  \right.
  \nonumber\\
  &\hspace{9mm}\left.
    +w_l(x_i,a_i,c_i)
    \{r_i(l)-\widehat q_l(x_i,\Phi_l(a_i,c_i))\}
  \right]
  \label{eq:adr}
\end{align}
The first term is the reward-model prediction under the evaluation policy, and the
second term corrects its error by applying the adaptive importance weight to the
observed residual. Following standard DR estimation \cite{dudik2011dr}, the
reward model serves as a control variate, while AIPS supplies the
adaptive importance weight for the residual correction. Setting
$\widehat q\equiv0$ recovers AIPS.

\subsection{Theoretical Properties}

Following the theoretical setting of AIPS, we treat $c_i$ as observed and
assume that the reward model $\widehat q$ is given independently of the logged
data $D$.

\begin{proposition}[Unbiasedness of ADR]\hfill\break
\label{prop:dr}
Assume that the logged observations are independent and identically
distributed, the true user behavior model $c_i$ is observed for every
observation, and common support holds.
Then, regardless of the predictive accuracy of
$\widehat q_l(x,\Phi_l(a,c))$,
\[
  \E_{p(D)}
  [\widehat V_{\mathrm{ADR}}(\pi_e;D)]
  =
  V(\pi_e)
\]
\end{proposition}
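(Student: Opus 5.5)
By the i.i.d. assumption and linearity of expectation, it suffices to show for a single observation $(x,a,c,r)$ and each position $l$ that
\[
  \E\bigl[\widehat q_l(x,c,\pi_e) + w_l(x,a,c)\{r(l)-\widehat q_l(x,\Phi_l(a,c))\}\bigr]
  =
  \E_{p(x)\pi_e(a\mid x)}[q_l(x,a)],
\]
after which we multiply by $\alpha_l$ and sum over $l$ to recover \eqref{eq:value}. The data-generating process is $p(x)p(c\mid x)\pi_0(a\mid x)p(r\mid x,a,c)$; we take $c$ to be drawn independently of $a$ given $x$, as in the AIPS setting. Because $\widehat q$ is independent of $D$, it may be treated as a fixed function inside all expectations.

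\textbf{Step 1 (the baseline cancels the reward model).} Condition on $(x,c)$. Since $\widehat q_l(x,\Phi_l(a,c))$ depends on $a$ only through $\Phi_l(a,c)$, we have
\[
  \E_{\pi_0(a\mid x)}\bigl[w_l(x,a,c)\,\widehat q_l(x,\Phi_l(a,c))\bigr]
  =
  \sum_{\phi}\pi_0(\phi\mid x)\frac{\pi_e(\phi\mid x)}{\pi_0(\phi\mid x)}\widehat q_l(x,\phi)
  =
  \widehat q_l(x,c,\pi_e).
\]
The sum runs over values $\phi$ of the relevant-action tuple. Common support at the ranking level implies support at the marginal level, because $\pi_e(\phi\mid x)>0$ requires some ranking $a$ with $\pi_e(a\mid x)>0$, and that ranking then has $\pi_0(a\mid x)>0$. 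Consequently the weighted reward-model term cancels the baseline $\widehat q_l(x,c,\pi_e)$ in conditional expectation, whatever the accuracy of $\widehat q$.

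\textbf{Step 2 (reduce to AIPS).} What remains is $\E[w_l(x,a,c)\,r(l)]$, the AIPS term. We first take the expectation over $r$ given $(x,a,c)$, which yields $q_l(x,\Phi_l(a,c))$ by the conditional reward assumption. Then, by the same marginalization over $\phi$ as in Step 1, the expectation over $a\sim\pi_0$ equals $\E_{\pi_e(a\mid x)}[q_l(x,a,c)]$. Finally, averaging over $c\sim p(c\mid x)$ and $x\sim p(x)$ gives $\E_{p(x)\pi_e(a\mid x)}[q_l(x,a)]$, using the marginalization identity $q_l(x,a)=\E_{p(c\mid x)}[q_l(x,a,c)]$.

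\textbf{Main obstacle.} The delicate point is the joint distribution of $(a,c)$ and $r$. We need two facts: that $c$ is not influenced by $a$ given $x$, so that we may average over $a$ with $c$ held fixed, and that $\E[r(l)\mid x,a,c]=q_l(x,\Phi_l(a,c))$ holds with the observed $c$. This is exactly the AIPS unbiasedness condition of Kiyohara et al., and we will state explicitly that we inherit it. Step 1 needs no such condition beyond $\widehat q$ being a fixed function of $(x,\Phi_l(a,c))$, so it carries the ``regardless of predictive accuracy'' claim.
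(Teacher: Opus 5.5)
Your proposal is correct and follows essentially the paper's route: you reduce to one sample via identical distribution, take the conditional expectation of $r(l)$ given $(x,a,c)$, and change measure using $\pi_0(\phi\mid x)=\sum_{a:\Phi_l(a,c)=\phi}\pi_0(a\mid x)$. The differences are minor: the paper marginalizes the residual $q_l-\widehat q_l$ in one step rather than splitting off the $\widehat q_l$ term, and it uses marginal-level support implicitly where you verify it. One small correction to your closing remark is that Step 1 also needs $a\sim\pi_0(\cdot\mid x)$ given $(x,c)$, i.e.\ $c$ independent of $a$ given $x$; this holds under the factorization $p(x)p(c\mid x)\pi_0(a\mid x)p(r\mid x,a,c)$ you already assumed, so the proof is unaffected.
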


In ADR, the marginalized importance weight induced by the observed user
behavior model $c$ reweights the residual expectation under the logging policy
to match that under the evaluation policy.
The complete proof is given in Appendix~\ref{app:proof-dr}.

\begin{theorem}[Variance difference from AIPS]\hfill\break
\label{thm:var}
Assume the theoretical setting above and common support.
Let
\[
\Delta_{q,\widehat q,l}(x,a,c)
:=\widehat q_l(x,\Phi_l(a,c))-q_l(x,a,c)
\]
Then
\begin{align}
&\Var_{p(D)}(\widehat V_{\mathrm{AIPS}}(\pi_e;D))
-\Var_{p(D)}(\widehat V_{\mathrm{ADR}}(\pi_e;D))
\nonumber\\
&=\frac1n
\E_{p(x,c)}
\Bigg[
  \Var_{\pi_0(a\mid x)}
  \left(
    \sum_{l=1}^L
    \alpha_lw_l(x,a,c)q_l(x,a,c)
  \right)
\nonumber\\
&\hspace{18mm}-
  \Var_{\pi_0(a\mid x)}
  \left(
    \sum_{l=1}^L
    \alpha_lw_l(x,a,c)\Delta_{q,\widehat q,l}(x,a,c)
  \right)
\Bigg]
\label{eq:variance-difference}
\end{align}
\end{theorem}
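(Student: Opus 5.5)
Since both estimators are averages of $n$ i.i.d.\ terms, each variance equals $\frac1n$ times the variance of a single summand. We therefore work with one observation $(x,c,a,r)$ drawn from $p(x)p(c\mid x)\pi_0(a\mid x)p(r\mid x,a,c)$. Write
\[
Y_{\mathrm{AIPS}}=\sum_l\alpha_l w_l r(l),
\qquad
Y_{\mathrm{ADR}}=\sum_l\alpha_l\bigl[\widehat q_l(x,c,\pi_e)+w_l\{r(l)-\widehat q_l(x,\Phi_l(a,c))\}\bigr],
\]
with $w_l=w_l(x,a,c)$. We decompose each variance by the law of total variance, conditioning successively on $(x,c)$, then on $a$, then on $r$:
\[
\Var(Y)=\E_{x,c}\E_{a}[\Var_r(Y\mid x,c,a)]+\E_{x,c}[\Var_a(\E_r[Y\mid x,c,a])]+\Var_{x,c}(\E[Y\mid x,c]).
\]

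\textbf{Step 1: the reward-noise terms coincide.} Given $(x,c,a)$, the quantities $\widehat q_l(x,c,\pi_e)$ and $\widehat q_l(x,\Phi_l(a,c))$ are deterministic, because $\widehat q$ is independent of $D$. Hence $Y_{\mathrm{ADR}}-Y_{\mathrm{AIPS}}$ is constant in $r$, and the first terms are equal.

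\textbf{Step 2: the context terms coincide.} By the unbiasedness argument of Proposition~\ref{prop:dr}, applied conditionally on $(x,c)$, we have
\[
\E_{\pi_0}[w_l\,g(x,\Phi_l(a,c))\mid x,c]=\E_{\pi_e}[g(x,\Phi_l(a,c))\mid x]
\]
for any $g$. This holds because $w_l$ is the ratio of the marginals of $\Phi_l(a,c)$ under $\pi_e$ and $\pi_0$, with common support. Applying it to $g=\widehat q_l$ shows that the control-variate terms cancel in conditional expectation, so
\[
\E[Y_{\mathrm{ADR}}\mid x,c]=\E[Y_{\mathrm{AIPS}}\mid x,c]=\sum_l\alpha_l\E_{\pi_e}[q_l(x,a,c)].
\]
The third terms are therefore equal as well.

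\textbf{Step 3: the action terms.} Using $\E_r[r(l)\mid x,a,c]=q_l(x,a,c)$, we get
\[
\E_r[Y_{\mathrm{AIPS}}\mid x,c,a]=\sum_l\alpha_l w_l q_l,
\qquad
\E_r[Y_{\mathrm{ADR}}\mid x,c,a]=\sum_l\alpha_l\widehat q_l(x,c,\pi_e)-\sum_l\alpha_l w_l\Delta_{q,\widehat q,l}.
\]
The first sum in the ADR expression is constant given $(x,c)$, and negation does not change variance. So the ADR middle term is $\E_{x,c}\Var_{\pi_0}\bigl(\sum_l\alpha_l w_l\Delta_{q,\widehat q,l}\bigr)$. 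Subtracting and multiplying by $\frac1n$ gives \eqref{eq:variance-difference}.

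\textbf{Main obstacle.} The delicate step is Step 2 and the conditioning structure behind it. We must be careful that $c$ is part of the observation and that $r$ depends on $a$ only through $\Phi_l(a,c)$ given $c$. Then $q_l(x,a,c)$ is the correct conditional mean in Step 3, and the marginalized-weight identity holds conditionally on $c$, not merely after averaging over $c$. I would verify this identity explicitly by summing $\pi_0(a\mid x)w_l$ over rankings $a$ that share the same $\Phi_l(a,c)$.
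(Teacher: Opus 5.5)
Your proof is correct and follows essentially the same route as the paper's. Both reduce to a single observation via the i.i.d.\ structure and apply the three-level law of total variance over $(x,c)$, $a$, and $r$; the reward-noise and context terms then coincide for AIPS and ADR (the paper identifies both context terms as $\Var_{p(x,c)}[q(x,c,\pi_e)]$ via the marginalized-weight identity), and the stated difference comes from the action terms. Your observation that $Y_{\mathrm{ADR}}-Y_{\mathrm{AIPS}}$ is free of $r$ and has zero conditional mean given $(x,c)$ justifies explicitly the term-matching the paper only asserts for ADR, and it shows that the variance identity itself does not need $q_l$ to depend on $a$ only through $\Phi_l(a,c)$.
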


This variance difference compares the variation of the weighted expected
reward with that of the weighted reward-model error.
Because multiple positions share the same ranking, the variance-reduction
condition is evaluated for the weighted sum rather than position by position.
If the latter is no larger than the former, ADR has no larger variance than AIPS.
If $\widehat q_l(x,\Phi_l(a,c))=q_l(x,a,c)$ for every $l,x,a,c$, then
$\Delta_{q,\widehat q,l}=0$ and the condition always holds.
The complete proof is given in Appendix~\ref{app:proof-var}.

\section{Experiments}

We evaluate ADR as the logged-data size and ranking length vary.

\subsection{Setup}

We use the synthetic ranking OPE environment of Kiyohara et
al.~\cite{kiyohara2023aips} and compare six estimators: IPS, IIPS, RIPS, AIPS,
ADR, and ADR (oracle). AIPS and ADR share the learned user behavior model and
self-normalized weights and differ only in whether the learned reward model is
used; the oracle instead uses the simulator's latent true expected reward. The data-size experiment
fixes $L=8$ and varies $n$ from $1{,}000$ to $32{,}000$ by powers of two,
whereas the
ranking-length experiment fixes $n=8{,}000$ and varies
$L\in\{4,6,8,10,12,14\}$. We run 10,000 simulations per condition and
evaluate MSE. Squared bias, variance, and full implementation
details are reported in Appendix~\ref{app:experiment-details}.

\begin{figure*}[t]
  \centering
  \includegraphics[width=\textwidth]{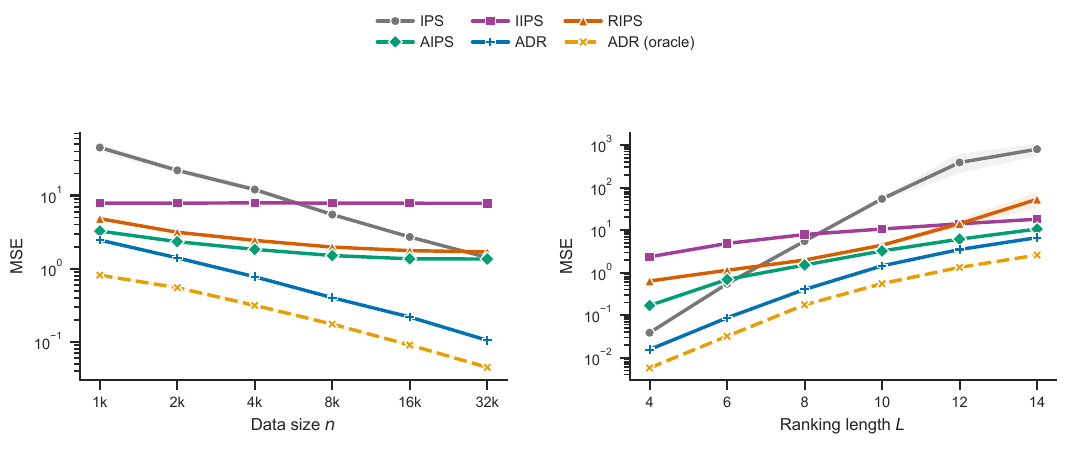}
  \caption{MSE against data size (left) and ranking length (right).}
  \Description{Two line charts compare the MSE of six off-policy estimators as data size and ranking length vary.}
  \label{fig:main-mse}
\end{figure*}

\subsection{Results and Discussion}

\paragraph{Research Question 1 (RQ1): Effect of data size on ADR}\hfill\break
IPS is nearly unbiased but has high variance because it weights the full
ranking. IIPS has low variance but substantial squared bias, while RIPS cannot
eliminate squared bias when the cascade assumption does not hold.
This agrees with Kiyohara et al.~\cite{kiyohara2023aips}: imposing a single
assumption under context-dependent heterogeneous user behavior can introduce bias.
The complete bias--variance decomposition for all estimators is reported in
Appendix~\ref{app:experiment-details}.
At $n=32{,}000$, IPS is nearly unbiased with squared bias
$1.04\times10^{-4}$, but its variance of $1.423$ dominates its error.
IIPS reduces variance to $0.00077$ but retains squared bias of $7.836$.
RIPS reduces variance to $0.0975$ but incurs squared bias of $1.597$
(Table~\ref{tab:representative-decomposition}).

Figure~\ref{fig:main-mse} (left) shows that ADR achieves the lowest MSE among
all non-oracle estimators at every data size. Its MSE decreases from $2.465$ at
$n=1{,}000$ to $0.105$ at $n=32{,}000$ and remains below AIPS throughout.

\begin{table*}[t]
  \caption{Bias--variance decomposition of AIPS and ADR across all conditions.}
  \label{tab:aips-decomposition}
  \centering
  \fontsize{10pt}{12pt}\selectfont
  \begin{minipage}[t]{0.48\textwidth}
    \centering
    \textbf{(a) Data size ($L=8$)}\par\smallskip
    \setlength{\tabcolsep}{3pt}
    \begin{tabular}{@{}lrrrr@{}}
      \toprule
      $n$ & \multicolumn{2}{c}{Bias$^2$} & \multicolumn{2}{c}{Variance} \\
      \cmidrule(lr){2-3}\cmidrule(l){4-5}
      & AIPS & ADR & AIPS & ADR \\
      \midrule
      1,000  & 0.986 & \textbf{0.0959}  & \textbf{2.279} & 2.369 \\
      2,000  & 0.898 & \textbf{0.0307}  & 1.444 & \textbf{1.375} \\
      4,000  & 0.987 & \textbf{0.0232}  & 0.850 & \textbf{0.754} \\
      8,000  & 1.053 & \textbf{0.0212}  & 0.459 & \textbf{0.381} \\
      16,000 & 1.099 & \textbf{0.0131}  & 0.266 & \textbf{0.206} \\
      32,000 & 1.205 & \textbf{0.00554} & 0.151 & \textbf{0.0998} \\
      \bottomrule
    \end{tabular}
  \end{minipage}\hfill
  \begin{minipage}[t]{0.48\textwidth}
    \centering
    \textbf{(b) Ranking length ($n=8{,}000$)}\par\smallskip
    \setlength{\tabcolsep}{3pt}
    \begin{tabular}{@{}lrrrr@{}}
      \toprule
      $L$ & \multicolumn{2}{c}{Bias$^2$} & \multicolumn{2}{c}{Variance} \\
      \cmidrule(lr){2-3}\cmidrule(l){4-5}
      & AIPS & ADR & AIPS & ADR \\
      \midrule
      4  & 0.148 & $\mathbf{8.69{\times}10^{-6}}$ & 0.0211 & \textbf{0.0156} \\
      6  & 0.578 & \textbf{0.00105} & 0.112 & \textbf{0.0856} \\
      8  & 1.051 & \textbf{0.0214}  & 0.459 & \textbf{0.381} \\
      10 & 1.726 & \textbf{0.129}   & 1.513 & \textbf{1.315} \\
      12 & 2.572 & \textbf{0.426}   & 3.543 & \textbf{3.073} \\
      14 & 4.038 & \textbf{1.024}   & 6.543 & \textbf{5.620} \\
      \bottomrule
    \end{tabular}
  \end{minipage}
\end{table*}

The bias--variance decomposition shows that ADR substantially reduces squared
bias at every data size. Its variance is slightly larger than that of AIPS at
$n=1{,}000$ ($2.369$ versus $2.279$), but is lower for $n\geq2{,}000$. At
$n=32{,}000$, ADR reduces squared bias from $1.205$ to $0.00554$ and variance
from $0.151$ to $0.100$ (Table~\ref{tab:aips-decomposition}). Thus, the MSE
improvement in the data-size experiment reflects reductions in both squared
bias and variance.

\paragraph{RQ2: Effect of ranking length on ADR}\hfill\break
For longer rankings, IPS variance grows rapidly, IIPS accumulates squared bias,
and both squared bias and variance increase for RIPS. At $L=14$, IPS variance
reaches $796.278$.
IIPS retains low variance ($0.00558$) but has squared bias $18.035$, while
RIPS has variance $50.291$ and squared bias $2.583$
(Table~\ref{tab:representative-decomposition}).

AIPS achieves a better bias--variance trade-off than estimators based on these
fixed user-behavior assumptions. However, both its squared bias and variance
increase with the ranking length.

In contrast, Figure~\ref{fig:main-mse} (right) shows that although the MSE of
every non-oracle estimator increases with the ranking length, ADR achieves the
lowest MSE for every $L$. Its MSE rises from $0.0156$ at $L=4$ to $6.645$ at
$L=14$, compared with $0.169$ and $10.581$ for AIPS. The corresponding
improvement over AIPS is $90.8\%$ at $L=4$ and $37.2\%$ at $L=14$.

In a direct comparison with AIPS, ADR reduces squared bias from $4.038$ to
$1.024$ and variance from $6.543$ to $5.620$ at $L=14$, improving both error
components (Table~\ref{tab:aips-decomposition}). Thus, ADR maintains more
stable estimation accuracy than the existing estimators as ranking length
increases.

In both experiments, ADR (oracle) achieves lower MSE than ADR with a learned
reward model. This gap suggests that improving reward prediction can further
improve ADR's estimation accuracy.

\begin{table}[H]
  \caption{Representative bias--variance decomposition.}
  \label{tab:representative-decomposition}
  \centering
  \fontsize{10pt}{10.5pt}\selectfont
  \begin{minipage}[t]{0.48\columnwidth}
    \centering
    \textbf{$n=32{,}000$}\par\smallskip
    \setlength{\tabcolsep}{2pt}
    \begin{tabular}{@{}lrr@{}}
      \toprule
      Est. & Bias$^2$ & Var. \\
      \midrule
      IPS  & 0.00010 & 1.423 \\
      IIPS & 7.836 & 0.00077 \\
      RIPS & 1.597 & 0.0975 \\
      \bottomrule
    \end{tabular}
  \end{minipage}\hfill
  \begin{minipage}[t]{0.48\columnwidth}
    \centering
    \textbf{$L=14$}\par\smallskip
    \setlength{\tabcolsep}{2pt}
    \begin{tabular}{@{}lrr@{}}
      \toprule
      Est. & Bias$^2$ & Var. \\
      \midrule
      IPS  & 0.690 & 796.278 \\
      IIPS & 18.035 & 0.00558 \\
      RIPS & 2.583 & 50.291 \\
      \bottomrule
    \end{tabular}
  \end{minipage}
\end{table}

\section{Conclusion}

ADR combines adaptive importance weighting with a reward-model residual correction for ranking OPE.
It is unbiased when the true user behavior model is observed, and our analysis
gives a sufficient condition for variance reduction; experiments show lower
MSE than existing estimators. Because they use only synthetic data, validation
on real logs is needed. Future work includes combining ADR with action
abstractions or embeddings \cite{saito2022mips,kiyohara2024lips} and extending
it to off-policy learning with generalization or regret guarantees
\cite{dudik2011dr,shimizu2024opcb}.

\clearpage
\bibliographystyle{ACM-Reference-Format}
\bibliography{references}

\clearpage
\appendix
\onecolumn
\raggedbottom
\section{Theoretical Results and Proofs}

\subsection{Proof Setup}

The joint distribution of the logged data is
\begin{align}
  p(D)
  =
  \prod_{i=1}^n
  p(x_i)p(c_i\mid x_i)
  \pi_0(a_i\mid x_i)
  p(r_i\mid x_i,a_i,c_i)
  \label{eq:app-data}
\end{align}
For one logged observation, define
\begin{align*}
  p_D(x,c,a,r)
  &:=
  p(x)p(c\mid x)\pi_0(a\mid x)p(r\mid x,a,c)\\
  p_D(x,c,a)
  &:=
  p(x)p(c\mid x)\pi_0(a\mid x)
\end{align*}
We also write
\[
  q_l(x,c,\pi)
  :=
  \E_{\pi(a\mid x)}[q_l(x,a,c)]
  \qquad
  q(x,c,\pi)
  :=
  \sum_{l=1}^L\alpha_lq_l(x,c,\pi)
\]
Then
\[
  V(\pi_e)
  =
  \E_{p(x)}\E_{p(c\mid x)}
  [q(x,c,\pi_e)]
\]

\subsection{Proof of Proposition~\ref{prop:dr}}
\label{app:proof-dr}

\begin{proof}
Substituting Eq.~\eqref{eq:adr} into its expectation gives
\begin{align}
&\E_{p(D)}
[\widehat V_{\mathrm{ADR}}(\pi_e;D)]
\nonumber\\
&=
\E_{p(D)}
\left[
  \frac1n\sum_{i=1}^n\sum_{l=1}^L\alpha_l
  \left\{
    \widehat q_l(x_i,c_i,\pi_e)
    +w_l(x_i,a_i,c_i)
    \left(r_i(l)-\widehat q_l(x_i,\Phi_l(a_i,c_i))\right)
  \right\}
\right]
\label{eq:app-adr-start}
\end{align}
By linearity and the identical distribution of the logged observations,
\begin{align}
&\E_{p(D)}
[\widehat V_{\mathrm{ADR}}(\pi_e;D)]
\nonumber\\
&=
\sum_{l=1}^L\alpha_l
\E_{p_D(x,c,a,r)}
\left[
  \widehat q_l(x,c,\pi_e)
  +w_l(x,a,c)
  \left(r(l)-\widehat q_l(x,\Phi_l(a,c))\right)
\right]
\label{eq:app-adr-one-sample}
\end{align}
By the definition of the position-wise expected reward under $c$,
\[
  \E_{p(r(l)\mid x,a,c)}[r(l)]
  =q_l(x,a,c)
  =q_l(x,\Phi_l(a,c))
\]
Therefore, Eq.~\eqref{eq:app-adr-one-sample} becomes
\begin{align}
&\E_{p(D)}
[\widehat V_{\mathrm{ADR}}(\pi_e;D)]
\nonumber\\
&=
\sum_{l=1}^L\alpha_l
\E_{p(x,c)}
\Bigg[
  \widehat q_l(x,c,\pi_e)
  +\E_{\pi_0(a\mid x)}
  \left[
    w_l(x,a,c)
    \left\{
      q_l(x,\Phi_l(a,c))
      -\widehat q_l(x,\Phi_l(a,c))
    \right\}
  \right]
\Bigg]
\label{eq:app-adr-common}
\end{align}
Expand the residual correction for fixed $x,c,l$, and let $\phi$ range over
the possible values of $\Phi_l(a,c)$:
\begin{align}
&\E_{\pi_0(a\mid x)}
\left[
  w_l(x,a,c)
  \left\{q_l(x,\Phi_l(a,c))-\widehat q_l(x,\Phi_l(a,c))\right\}
\right]
\nonumber\\
&=
\sum_{\phi}
\frac{\pi_e(\phi\mid x)}{\pi_0(\phi\mid x)}
\left\{q_l(x,\phi)-\widehat q_l(x,\phi)\right\}
\sum_{a:\,\Phi_l(a,c)=\phi}\pi_0(a\mid x)
\nonumber\\
&=
\sum_{\phi}\pi_e(\phi\mid x)
\left\{q_l(x,\phi)-\widehat q_l(x,\phi)\right\}
\nonumber\\
&=
\E_{\pi_e(a\mid x)}
\left[
  q_l(x,\Phi_l(a,c))-\widehat q_l(x,\Phi_l(a,c))
\right]
\label{eq:app-adr-marginalization}
\end{align}
The second equality uses
\[
  \pi_0(\phi\mid x)
  =
  \sum_{a:\,\Phi_l(a,c)=\phi}\pi_0(a\mid x)
\]
Substituting Eq.~\eqref{eq:app-adr-marginalization} into
Eq.~\eqref{eq:app-adr-common} yields
\begin{align*}
&\E_{p(D)}[\widehat V_{\mathrm{ADR}}(\pi_e;D)]
\\
&=
\sum_{l=1}^L\alpha_l\E_{p(x,c)}
\Bigg[
  \widehat q_l(x,c,\pi_e)
  +\E_{\pi_e(a\mid x)}
  \left[
    q_l(x,\Phi_l(a,c))-\widehat q_l(x,\Phi_l(a,c))
  \right]
\Bigg]
\\
&=
\sum_{l=1}^L\alpha_l\E_{p(x,c)}
\E_{\pi_e(a\mid x)}[q_l(x,\Phi_l(a,c))]
\\
&=V(\pi_e)
\end{align*}
The predicted reward in the baseline cancels the prediction in the residual;
therefore, unbiasedness does not depend on the predictive accuracy of
$\widehat q_l$.
\end{proof}

\subsection{Proof of the Variance Difference}
\label{app:proof-var}

\begin{proof}
We first restate the AIPS estimator:
\begin{align}
  \widehat V_{\mathrm{AIPS}}(\pi_e;D)
  &:=
  \frac1n\sum_{i=1}^n\sum_{l=1}^L
  \alpha_lw_l(x_i,a_i,c_i)r_i(l)
  \label{eq:app-aips-definition}
\end{align}
By this definition and the i.i.d.~property of the logged observations,
\begin{equation}
\Var_{p(D)}(\widehat V_{\mathrm{AIPS}}(\pi_e;D))
=
\frac1n
\Var_{p_D(x,c,a,r)}
\left[
  \sum_{l=1}^L\alpha_lw_l(x,a,c)r(l)
\right]
\label{eq:app-aips-start}
\end{equation}
By the law of total variance,
\begin{align}
&\Var_{p(D)}
(\widehat V_{\mathrm{AIPS}}(\pi_e;D))
\nonumber\\
&=
\frac1n
\E_{p_D(x,c,a)}
\left[
  \Var_{p(r\mid x,a,c)}
  \left(
    \sum_{l=1}^L\alpha_lw_lr(l)
  \right)
\right]
\nonumber\\
&\quad+
\frac1n
\E_{p(x,c)}
\left[
  \Var_{\pi_0(a\mid x)}
  \left(
    \sum_{l=1}^L\alpha_lw_lq_l
  \right)
\right]
\nonumber\\
&\quad+
\frac1n
\Var_{p(x,c)}
\left[
  \E_{\pi_0(a\mid x)}
  \left[
    \sum_{l=1}^L\alpha_lw_lq_l
  \right]
\right]
\label{eq:app-aips-total}
\end{align}
Here and below, $w_l,q_l$, and $\Delta_{q,\widehat q,l}$ without explicit
arguments are evaluated at $(x,a,c)$.
By the definition of the marginalized weight,
\[
  \E_{\pi_0(a\mid x)}
  \left[
    \sum_{l=1}^L\alpha_lw_lq_l
  \right]
  =
q(x,c,\pi_e)
\]
Hence,
\begin{align}
&\Var_{p(D)}
(\widehat V_{\mathrm{AIPS}}(\pi_e;D))
\nonumber\\
&=
\frac1n
\E_{p_D(x,c,a)}
\left[
  \Var_{p(r\mid x,a,c)}
  \left(
    \sum_{l=1}^L\alpha_lw_lr(l)
  \right)
\right]
\nonumber\\
&\quad+
\frac1n
\E_{p(x,c)}
\left[
  \Var_{\pi_0(a\mid x)}
  \left(
    \sum_{l=1}^L\alpha_lw_lq_l
  \right)
\right]
\nonumber\\
&\quad+
\frac1n
\Var_{p(x,c)}[q(x,c,\pi_e)]
\label{eq:app-aips-final}
\end{align}

For ADR, the law of total variance gives
\begin{align}
&\Var_{p(D)}
(\widehat V_{\mathrm{ADR}}(\pi_e;D))
\nonumber\\
&=
\frac1n
\Var_{p_D(x,c,a,r)}
\Bigg[
  \sum_{l=1}^L\alpha_l
  \Big\{
    \widehat q_l(x,c,\pi_e)
\nonumber\\
&\hspace{28mm}
    +w_l(x,a,c)
    \bigl(r(l)-\widehat q_l(x,\Phi_l(a,c))\bigr)
  \Big\}
\Bigg]
\nonumber\\
&=
\frac1n
\E_{p_D(x,c,a)}
\left[
  \Var_{p(r\mid x,a,c)}
  \left(
    \sum_{l=1}^L\alpha_lw_lr(l)
  \right)
\right]
\nonumber\\
&\quad+
\frac1n
\E_{p(x,c)}
\left[
  \Var_{\pi_0(a\mid x)}
  \left(
    \sum_{l=1}^L\alpha_lw_l
    \Delta_{q,\widehat q,l}
  \right)
\right]
\nonumber\\
&\quad+
\frac1n
\Var_{p(x,c)}[q(x,c,\pi_e)]
\label{eq:app-adr-final}
\end{align}
Subtracting Eq.~\eqref{eq:app-adr-final} from
Eq.~\eqref{eq:app-aips-final} gives
\begin{align}
&\Var_{p(D)}(\widehat V_{\mathrm{AIPS}}(\pi_e;D))
-\Var_{p(D)}(\widehat V_{\mathrm{ADR}}(\pi_e;D))
\nonumber\\
&=
\frac1n\E_{p(x,c)}\Bigg[
\Var_{\pi_0(a\mid x)}
\left(\sum_{l=1}^L\alpha_lw_lq_l\right)
\nonumber\\
&\hspace{27mm}-
\Var_{\pi_0(a\mid x)}
\left(\sum_{l=1}^L\alpha_lw_l\Delta_{q,\widehat q,l}\right)
\Bigg]
\label{eq:app-variance-difference}
\end{align}
Hence, if
\begin{align}
&\E_{p(x,c)}\left[
\Var_{\pi_0(a\mid x)}
\left(\sum_{l=1}^L\alpha_lw_l\Delta_{q,\widehat q,l}\right)
\right]
\nonumber\\
&\qquad\leq
\E_{p(x,c)}\left[
\Var_{\pi_0(a\mid x)}
\left(\sum_{l=1}^L\alpha_lw_lq_l\right)
\right]
\label{eq:app-variance-condition}
\end{align}
Then ADR has no larger variance than AIPS. This condition requires the
variation in the importance-weighted reward-model error to be no greater than
that in the importance-weighted expected reward. It compares the variance of
the weighted sum rather than each position separately because multiple
positions share the same ranking $a$, and the variance of their sum includes
cross-position covariances. Thus, the variation in the importance-weighted
expected reward that appears in AIPS is replaced in ADR by the variation in
the importance-weighted reward-model error. In particular, if
$\widehat q_l(x,\Phi_l(a,c))=q_l(x,a,c)$ for every $l,x,a,c$, then
$\Delta_{q,\widehat q,l}=0$ and ADR has no larger variance than AIPS.
\end{proof}

\section{Supplementary Experiments}
\label{app:experiments}

\subsection{Experimental Details}
\label{app:experiment-details}

We use the synthetic ranking OPE environment from the public AIPS
implementation. Each context is sampled from the five-dimensional standard
normal distribution, and three candidate items are available at each position.
The policies are factorized across positions and allow the same item to appear
at multiple positions in a ranking; rankings can therefore be longer than the
number of candidate items. Position-wise rewards are continuous and, following
the AIPS experimental setup, are generated by adding Gaussian noise with
standard deviation $0.5$ to their conditional expectations.

An action--reward interaction matrix $c$ specifies which items in a ranking
affect each position-wise reward. We use the same six candidate matrices as the
public AIPS implementation and stochastically generate the true user behavior
model $c_i$ of each sample according to its context.

The logging policy is a softmax policy based on scores produced by Open Bandit
Pipeline's \texttt{linear\_behavior\_policy\_logit}. The evaluation policy
multiplies the logging-policy scores by $-2$ and applies an $\epsilon$-greedy
rule with exploration parameter $0.3$. Both policies assign positive
probability to every item and thus satisfy common support.

The data-size experiment fixes $L=8$ and varies
\[
n\in\{1{,}000,2{,}000,4{,}000,8{,}000,16{,}000,32{,}000\}
\]
The ranking-length experiment fixes $n=8{,}000$ and varies
\[
L\in\{4,6,8,10,12,14\}
\]

For practical AIPS and ADR, we learn a context-dependent user behavior model
$\widehat c_i$ from logged data using a User Behavior Tree (UBTree). At each
position, UBTree selects from the same six candidates as the public AIPS
implementation to reduce the estimated MSE of AIPS. We use a maximum depth of
5, a minimum of 100 samples per leaf, 10 bootstrap samples, and 10 candidate
split points. The behavior model is learned by two-fold cross-fitting, so each
sample is assigned a model trained without its fold.

For each sample and position, $\widehat c_i$ identifies the positions judged
relevant to that reward. Because the policies are factorized, we calculate the
marginal propensity under each policy by multiplying the item-selection
probabilities at those positions. The ratio of the evaluation-policy and
logging-policy marginal propensities defines the adaptive importance weight
used by AIPS and ADR. In the main experiments, AIPS and ADR share the same
estimated user behavior model and self-normalized adaptive importance weights.
Appendix~\ref{app:self-normalization} defines self-normalization and examines
its effect.

For ADR, we fit a separate HistGradientBoostingRegressor for each position.
Its input comprises the context and a one-hot representation of the entire
ranking, and its target is the observed reward at that position. We use squared
error loss, maximum depth 6, minimum leaf size 20, learning rate 0.05, 150
iterations, at most 31 leaves, and $L_2$ regularization coefficient 1.0. The
reward models are also learned by two-fold cross-fitting to produce
out-of-fold reward predictions. We approximate the predicted reward under the
evaluation policy by the Monte Carlo average over 16 rankings sampled from
that policy.

We compare IPS, IIPS, RIPS, AIPS, ADR, and ADR (oracle). AIPS and ADR share the
estimated user behavior model and self-normalized weights and differ only in
whether the learned reward-model residual correction is applied. ADR (oracle)
retains the same estimated user behavior model and weights but uses the true
expected reward conditional on the $c_i$ realized inside the simulator. This
estimator is not implementable and serves as a reference point for supplying
ideal information to the reward model. Appendix~\ref{app:oracle-ablation}
compares oracle variants that use different information.

For each condition, we run $10{,}000$ simulations. Estimation errors are
computed from the estimate and the simulator's on-policy reference value in
each simulation. The 95\% confidence bands are percentile intervals obtained
by resampling the simulation runs 400 times and recomputing each metric.

\subsection{Bias--Variance Decomposition of All Estimators}
\label{app:bias-variance}

\begin{figure}[H]
  \centering
  \includegraphics[width=\textwidth]{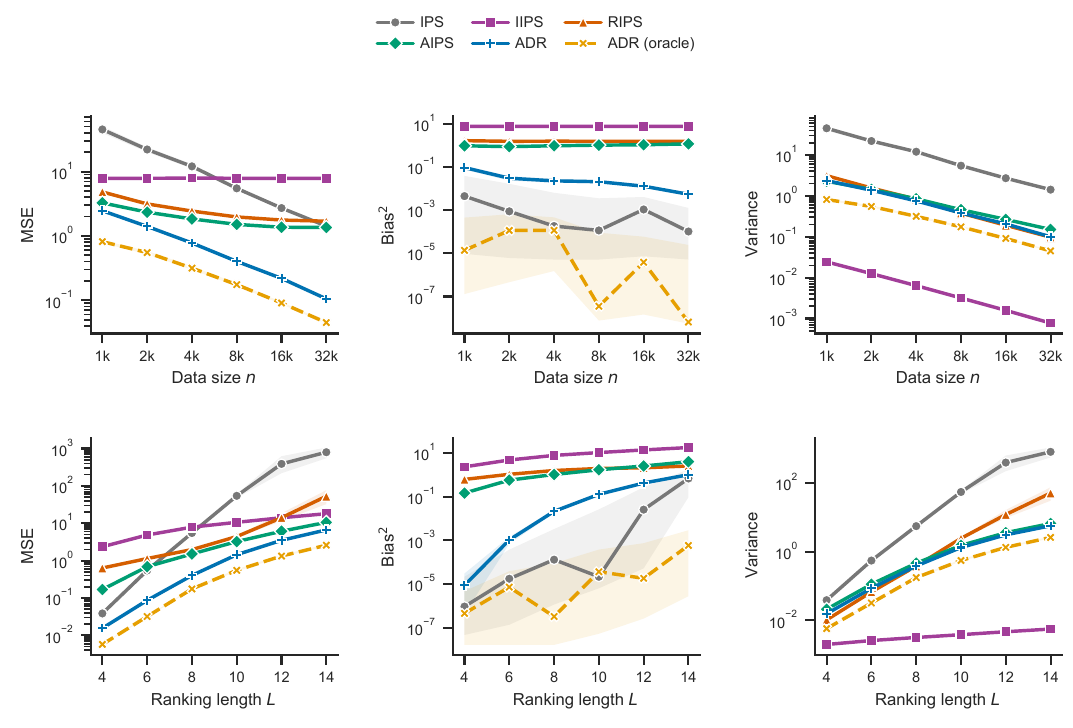}
  \caption{MSE, squared bias, and variance against data size (top) and ranking length (bottom).}
  \Description{Six line charts show MSE, squared bias, and variance for six estimators as data size and ranking length change.}
  \label{fig:appendix-matched}
\end{figure}

\begin{table}[H]
  \centering
  \caption{MSE, squared bias, and variance of all estimators in the data-size experiment.}
  \label{tab:all-estimators-data}
  \scriptsize
  \setlength{\tabcolsep}{3.5pt}
  \resizebox{0.85\textwidth}{!}{%
  \begin{tabular}{llrrrrrr}
    \toprule
    Metric & Estimator & $n=1$k & $n=2$k & $n=4$k & $n=8$k & $n=16$k & $n=32$k\\
    \midrule
    MSE & IPS & 45.160 & 22.051 & 12.053 & 5.491 & 2.713 & 1.423\\
        & IIPS & 7.841 & 7.865 & 7.923 & 7.874 & 7.843 & 7.837\\
        & RIPS & 4.826 & 3.145 & 2.435 & 1.974 & 1.766 & 1.695\\
        & AIPS & 3.266 & 2.342 & 1.837 & 1.513 & 1.365 & 1.356\\
        & ADR & 2.465 & 1.406 & 0.777 & 0.403 & 0.219 & 0.105\\
        & ADR (oracle) & \textbf{0.816} & \textbf{0.552} & \textbf{0.316} & \textbf{0.175} & \textbf{0.091} & \textbf{0.045}\\
    \midrule
    Squared bias & IPS & 0.00454 & $8.93{\times}10^{-4}$ & $1.85{\times}10^{-4}$ & $1.16{\times}10^{-4}$ & 0.00109 & $1.04{\times}10^{-4}$\\
        & IIPS & 7.817 & 7.852 & 7.917 & 7.870 & 7.841 & 7.836\\
        & RIPS & 1.702 & 1.586 & 1.608 & 1.596 & 1.575 & 1.597\\
        & AIPS & 0.986 & 0.898 & 0.987 & 1.053 & 1.099 & 1.205\\
        & ADR & 0.096 & 0.031 & 0.023 & 0.021 & 0.013 & 0.00554\\
        & ADR (oracle) & \textbf{1.39\(\times\)10\textsuperscript{-5}} & \textbf{1.15\(\times\)10\textsuperscript{-4}} & \textbf{1.17\(\times\)10\textsuperscript{-4}} & \textbf{3.51\(\times\)10\textsuperscript{-8}} & \textbf{3.86\(\times\)10\textsuperscript{-6}} & \textbf{6.47\(\times\)10\textsuperscript{-9}}\\
    \midrule
    Variance & IPS & 45.156 & 22.051 & 12.052 & 5.491 & 2.712 & 1.423\\
        & IIPS & \textbf{0.024} & \textbf{0.013} & \textbf{0.00637} & \textbf{0.00318} & \textbf{0.00158} & \textbf{7.71\(\times\)10\textsuperscript{-4}}\\
        & RIPS & 3.123 & 1.559 & 0.827 & 0.379 & 0.191 & 0.098\\
        & AIPS & 2.279 & 1.444 & 0.850 & 0.459 & 0.266 & 0.151\\
        & ADR & 2.369 & 1.375 & 0.754 & 0.381 & 0.206 & 0.100\\
        & ADR (oracle) & 0.816 & 0.552 & 0.316 & 0.175 & 0.091 & 0.045\\
    \bottomrule
  \end{tabular}}
\end{table}

\begin{table}[H]
  \centering
  \caption{MSE, squared bias, and variance of all estimators in the ranking-length experiment.}
  \label{tab:all-estimators-ranking}
  \scriptsize
  \setlength{\tabcolsep}{3.5pt}
  \resizebox{0.85\textwidth}{!}{%
  \begin{tabular}{llrrrrrr}
    \toprule
    Metric & Estimator & $L=4$ & $L=6$ & $L=8$ & $L=10$ & $L=12$ & $L=14$\\
    \midrule
    MSE & IPS & 0.039 & 0.548 & 5.493 & 54.177 & 387.848 & 796.968\\
        & IIPS & 2.354 & 4.852 & 7.873 & 10.611 & 14.032 & 18.040\\
        & RIPS & 0.635 & 1.137 & 1.974 & 4.414 & 14.069 & 52.874\\
        & AIPS & 0.169 & 0.690 & 1.510 & 3.239 & 6.115 & 10.581\\
        & ADR & 0.016 & 0.087 & 0.402 & 1.444 & 3.498 & 6.645\\
        & ADR (oracle) & \textbf{0.00574} & \textbf{0.032} & \textbf{0.175} & \textbf{0.551} & \textbf{1.328} & \textbf{2.586}\\
    \midrule
    Squared bias & IPS & $9.29{\times}10^{-7}$ & $1.74{\times}10^{-5}$ & $1.31{\times}10^{-4}$ & \textbf{2.16\(\times\)10\textsuperscript{-5}} & 0.026 & 0.690\\
        & IIPS & 2.352 & 4.850 & 7.870 & 10.607 & 14.027 & 18.035\\
        & RIPS & 0.625 & 1.070 & 1.595 & 1.988 & 2.126 & 2.583\\
        & AIPS & 0.148 & 0.578 & 1.051 & 1.726 & 2.572 & 4.038\\
        & ADR & $8.69{\times}10^{-6}$ & 0.00105 & 0.021 & 0.129 & 0.426 & 1.024\\
        & ADR (oracle) & \textbf{4.54\(\times\)10\textsuperscript{-7}} & \textbf{7.21\(\times\)10\textsuperscript{-6}} & \textbf{3.25\(\times\)10\textsuperscript{-7}} & $3.66{\times}10^{-5}$ & \textbf{1.83\(\times\)10\textsuperscript{-5}} & \textbf{5.87\(\times\)10\textsuperscript{-4}}\\
    \midrule
    Variance & IPS & 0.039 & 0.548 & 5.493 & 54.177 & 387.822 & 796.278\\
        & IIPS & \textbf{0.00199} & \textbf{0.00259} & \textbf{0.00318} & \textbf{0.00383} & \textbf{0.00467} & \textbf{0.00558}\\
        & RIPS & 0.010 & 0.067 & 0.379 & 2.426 & 11.943 & 50.291\\
        & AIPS & 0.021 & 0.112 & 0.459 & 1.513 & 3.543 & 6.543\\
        & ADR & 0.016 & 0.086 & 0.381 & 1.315 & 3.073 & 5.620\\
        & ADR (oracle) & 0.00574 & 0.032 & 0.175 & 0.551 & 1.328 & 2.586\\
    \bottomrule
  \end{tabular}}
\end{table}

In the data-size experiment, the conventional estimators fail to attain
accurate estimates for different reasons. The variance of IPS decreases from
$45.156$ at $n=1{,}000$ to $1.423$ at $n=32{,}000$, but remains large because
IPS weights the entire ranking. IIPS reduces variance from $0.024$ to $0.001$,
but its squared bias remains approximately $7.8$, leaving its MSE nearly
unchanged as the data size grows. RIPS also reduces variance from $3.123$ to
$0.098$, while its squared bias remains between $1.702$ and $1.597$. These
results show that fixed browsing assumptions can reduce variance while leaving
squared bias when they mismatch the actual user behavior. At $n=32{,}000$, ADR
reduces the squared bias of AIPS from $1.205$
to $0.006$ and its variance from $0.151$ to $0.100$. Thus, reductions in both
squared bias and variance contribute to ADR's MSE improvement in the
data-size experiment.

In the ranking-length experiment, the weaknesses of the conventional
estimators become more pronounced as the ranking grows. IPS remains nearly
unbiased, but its variance grows rapidly and reaches $796.278$ at $L=14$.
IIPS reduces variance to $0.006$, but its squared bias increases to $18.035$.
RIPS has squared bias $2.583$ and variance $50.291$. ADR achieves the smallest MSE
among all non-oracle estimators at every ranking length. At $L=14$, replacing
AIPS with ADR reduces squared bias from $4.038$ to $1.024$ and variance from
$6.543$ to $5.620$. ADR therefore improves both error components of AIPS even
for long rankings.

\subsection{Comparison of Reward-Model Information}
\label{app:oracle-ablation}

We compare ADR, ADR (marginal true $q$), and ADR (latent true $q$).
All three estimators use the same estimated user behavior model $\widehat c$
and self-normalized importance weights; they differ only in the reward model.

ADR does not observe the latent user behavior $c_i$ realized for each sample
and uses a reward model $\widehat q_l(x,a)$ learned from the observable
$(x_i,a_i,r_i)$. ADR (marginal true $q$) represents the population target of
an ideally learned reward model based on observable $(x,a)$ and uses
\[
  q_l^{\mathrm{marg}}(x,a)
  =
  \mathbb E_{p(c\mid x)}
  \left[q_l(x,a,c)\right]
\]
This quantity averages the true expected reward over possible user behaviors given
$x$, without using the realized $c_i$. ADR (latent true $q$) instead uses the
true expected reward $q_l(x,a,c_i)$ conditional on the $c_i$ realized for each
sample. It is therefore a stronger reference that uses $c_i$, which is
unavailable in ordinary logged data, for reward prediction.

ADR (marginal true $q$) is not the perfect reward model $q_l(x,a,c)$ used in
the theoretical special case. It averages over $c$ given the observable
$(x,a)$. Hence, its difference from the true expected reward conditional on
the realized $c_i$,
\[
  \Delta_{q,q^{\mathrm{marg}},l}(x,a,c_i)
  =
  q_l^{\mathrm{marg}}(x,a)-q_l(x,a,c_i)
\]
is generally nonzero. In contrast, ADR (latent true $q$) uses
$q_l(x,a,c_i)$, so its reward-model error is zero.

\begin{figure}[H]
  \centering
  \includegraphics[width=\textwidth]{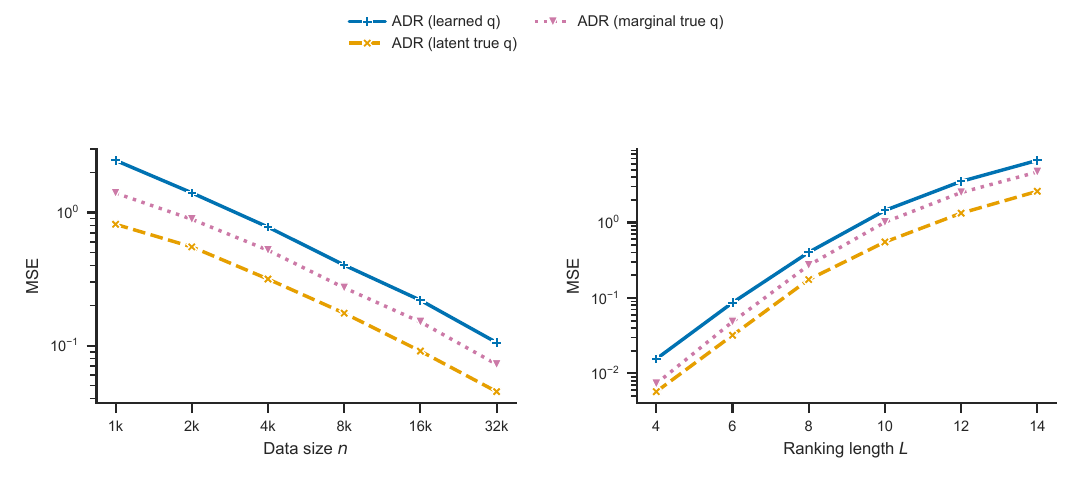}
  \caption{Comparison of reward-model information. The left and right panels show MSE against data size and ranking length, respectively.}
  \Description{Two line charts compare ADR using a learned reward model, a marginalized true reward model, and a true reward model conditional on latent user behavior.}
  \label{fig:appendix-oracle}
\end{figure}

\begin{table}[H]
  \centering
  \caption{MSE of ADR with different reward models at representative data sizes.}
  \label{tab:oracle-ablation-data}
  \begin{tabular}{lrrr}
    \toprule
    $n$ & ADR & Marginal true $q$ & Latent true $q$\\
    \midrule
    $1{,}000$ & 2.465 & 1.399 & 0.816\\
    $32{,}000$ & 0.105 & 0.072 & 0.045\\
    \bottomrule
  \end{tabular}
\end{table}

\begin{table}[H]
  \centering
  \caption{MSE of ADR with different reward models at representative ranking lengths.}
  \label{tab:oracle-ablation-ranking}
  \begin{tabular}{lrrr}
    \toprule
    $L$ & ADR & Marginal true $q$ & Latent true $q$\\
    \midrule
    $4$ & 0.0156 & 0.00732 & 0.00574\\
    $14$ & 6.645 & 4.660 & 2.586\\
    \bottomrule
  \end{tabular}
\end{table}

The purpose of this comparison is to separate the gap between ADR and the
latent-true-$q$ oracle into two sources. The gap from ADR to marginal true $q$
quantifies how far ADR can improve when its reward model is made ideal within
the observable information.
Relative to learned $q$, marginal true $q$ reduces MSE by $31.4\%$ at
$n=32{,}000$ and by $29.9\%$ at $L=14$. Thus, improving the reward model
learned from ordinary logged data can further improve ADR's estimation
accuracy.

The remaining gap from marginal true $q$ to latent true $q$ reflects the
effect of additionally using each sample's realized $c_i$ for reward
prediction. Latent true $q$ reduces MSE relative to marginal true $q$ by
$37.5\%$ at $n=32{,}000$ and by $44.5\%$ at $L=14$. This additional gain
cannot be obtained merely by perfectly predicting the population target
$q_l^{\mathrm{marg}}(x,a)$. This decomposition distinguishes whether the gap
between ADR and the oracle arises from reward-model learning and approximation
error or from the oracle's access to the unobserved $c_i$.

The variance difference in Theorem~1 contains the conditional variance of
the importance-weighted reward-model error. The error remaining under
marginal true $q$ therefore reduces the variance improvement that ADR can
obtain over AIPS. In contrast, latent true $q$ corresponds to the ideal
reward-model condition because its reward-model error is zero. However, all
three estimators in this comparison retain the estimated user behavior model
and self-normalized weights. Thus, this comparison does not reproduce the
exact setting of Theorem~1; it examines how the information available to the
reward model affects estimation accuracy.

\subsection{Effect of Self-Normalization}
\label{app:self-normalization}

The AIPS and ADR estimators defined in the theoretical analysis are standard
sample averages with unnormalized weights $w_{i,l}$. In contrast, the main
experiments use
\[
  \widetilde w_{i,l}
  :=
  \frac{w_{i,l}}
       {\frac1n\sum_{j=1}^n w_{j,l}}
\]
The normalized weights have sample mean one at each position. The theoretical
analysis uses unnormalized ADR because its change-of-measure identity holds
exactly. The main experiments instead apply self-normalization to both AIPS
and ADR to limit the extent to which extremely large weights dominate the
estimate for long rankings and to compare ADR's residual correction under the
same practical conditions.

Because the normalizing term is itself a random variable computed from the
logged data, self-normalization generally loses exact finite-sample
unbiasedness, and the theoretical guarantees in the main text do not directly
apply to the self-normalized estimator. In return, it can reduce variance by
limiting the influence of extreme weights, creating a bias--variance trade-off.
We compare AIPS (raw), AIPS (self-normalized), ADR (raw), and ADR
(self-normalized), all based on the same user behavior model learned from
logged data.

\begin{figure}[H]
  \centering
  \includegraphics[width=0.82\textwidth]{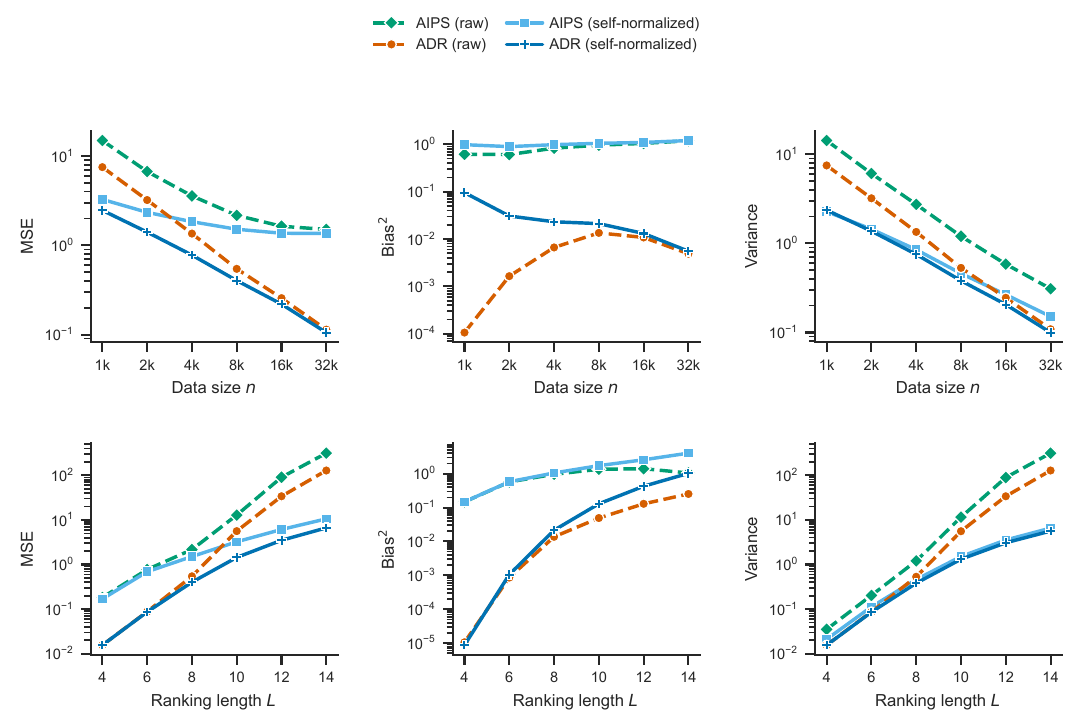}
  \caption{Effect of self-normalization on AIPS and ADR. The top and bottom rows report MSE, squared bias, and variance against data size and ranking length, respectively.}
  \Description{Six line charts compare MSE, squared bias, and variance of raw and self-normalized AIPS and ADR estimators.}
  \label{fig:self-normalization}
\end{figure}

\begin{table}[H]
  \centering
  \caption{MSE, squared bias, and variance before and after self-normalization at representative conditions.}
  \label{tab:self-normalization-representative}
  \small
  \begin{tabular}{llrrr}
    \toprule
    Condition & Estimator & MSE & Bias$^2$ & Variance\\
    \midrule
    $n=32{,}000$ & AIPS (raw) & 1.505 & \textbf{1.195} & 0.310\\
                  & AIPS (self-normalized) & \textbf{1.356} & 1.205 & \textbf{0.151}\\
                  & ADR (raw) & 0.114 & \textbf{0.00491} & 0.109\\
                  & ADR (self-normalized) & \textbf{0.105} & 0.00554 & \textbf{0.0998}\\
    \midrule
    $L=14$ & AIPS (raw) & 314.774 & \textbf{1.049} & 313.725\\
           & AIPS (self-normalized) & 10.581 & 4.038 & \textbf{6.543}\\
           & ADR (raw) & 128.327 & \textbf{0.252} & 128.075\\
           & ADR (self-normalized) & \textbf{6.645} & 1.024 & \textbf{5.620}\\
    \bottomrule
\end{tabular}
\end{table}

\paragraph{Bias--variance trade-off.}
The central result in Table~\ref{tab:self-normalization-representative} is
that self-normalization improves MSE because its reduction in variance
outweighs the increase in squared bias.

\paragraph{Data-size experiment.}
At $n=32{,}000$, self-normalization slightly increases squared bias from
$1.195$ to $1.205$ for AIPS and from $0.00491$ to $0.00554$ for ADR.
In contrast, it reduces variance from $0.310$ to $0.151$ for AIPS
(a $51.3\%$ reduction) and from $0.109$ to $0.0998$ for ADR
(an $8.1\%$ reduction). Consequently, MSE decreases from $1.505$ to $1.356$
for AIPS (a $9.9\%$ improvement) and from $0.114$ to $0.105$ for ADR
(a $7.2\%$ improvement). Thus, self-normalization improves MSE even when
the logged-data size is large, although its effect is smaller than for long
rankings (Table~\ref{tab:self-normalization-representative}).

\paragraph{Ranking-length experiment.}
The effect of self-normalization is more pronounced for long rankings. At
$L=14$, squared bias increases from $1.049$ to $4.038$ for AIPS and from
$0.252$ to $1.024$ for ADR. However, variance decreases from $313.725$ to
$6.543$ for AIPS (a $97.9\%$ reduction) and from $128.075$ to $5.620$ for
ADR (a $95.6\%$ reduction). Because this variance reduction outweighs the
increase in squared bias, MSE decreases from $314.774$ to $10.581$ for AIPS
(a $96.6\%$ improvement) and from $128.327$ to $6.645$ for ADR
(a $94.8\%$ improvement). Thus, self-normalization introduces finite-sample
bias but substantially reduces variance and improves MSE. Although outside
the theoretical guarantees in the main text, it is a practical choice for
evaluating long rankings with finite logged data.

\end{document}